%% file: icml2026.tex
\documentclass{article}

\usepackage{microtype}
\usepackage{graphicx}
\usepackage{subfigure}
\usepackage{booktabs} 

\usepackage{hyperref}

\usepackage[preprint]{icml2026}

\usepackage{amsmath}
\usepackage{amssymb}
\usepackage{mathtools}
\usepackage{amsthm}

\usepackage[capitalize,noabbrev]{cleveref}

\input{preamble/packages}
\input{preamble/commands_general}
\input{preamble/commands_project} 

\newtheorem{theorem}{Theorem}[section]

\newtheorem{definition}[theorem]{Definition}
\newtheorem{assumption}[theorem]{Assumption}

\usepackage[textsize=tiny]{todonotes}

\icmltitlerunning{Position: Explainable AI is Causality in Disguise}

\begin{document}

\twocolumn[
\icmltitle{Position: Explainable AI is Causality in Disguise}



\icmlsetsymbol{equal}{*}

\begin{icmlauthorlist}
\icmlauthor{Amir-Hossein Karimi}{xxx}
\end{icmlauthorlist}

\icmlaffiliation{xxx}{Department of Electrical \& Computer Engineering, University of Waterloo \& Vector Institute, Canada}

\icmlcorrespondingauthor{Amir-Hossein Karimi}{amirh.karimi@uwaterloo.ca}

\icmlkeywords{Explainable AI, Causal Inference, Sufficiency \& Necessity}

\vskip 0.3in
]



\printAffiliationsAndNotice{}  

\begin{abstract}

\input{_sections/00_abstract}
\end{abstract}

\input{_sections/01_introduction}
\input{_sections/02_background_xai}
\input{_sections/03_background_scm}

\input{_sections/04_proof}

\clearpage
\input{_sections/05_future}
\input{_sections/06_conclusion}

\clearpage
\bibliography{refs}
\bibliographystyle{icml2026}

\onecolumn
\appendix

\clearpage
\input{_sections/091_sufficiency_necessity}

\clearpage
\input{_sections/092_more_causality}

\clearpage
\input{_sections/093_grad_questions}

\end{document}

%% file: preamble/packages.tex
\usepackage[T1]{fontenc}
\usepackage[utf8]{inputenc}
\usepackage{color, soul}
\usepackage{soul}
\usepackage{xspace}
\usepackage{xcolor}
\usepackage{adjustbox}
\usepackage{enumitem}
\usepackage{tikz}
\usetikzlibrary{positioning, shapes.geometric, arrows.meta}
\usepackage{multirow}
\usepackage{hhline}

%% file: preamble/commands_general.tex
\newcommand{\Mcal}{\mathcal{M}}

\ifx\BlackBox\undefined
\newcommand{\BlackBox}{\rule{1.5ex}{1.5ex}}  
\fi

\ifx\QED\undefined
\def\QED{~\rule[-1pt]{5pt}{5pt}\par\medskip}
\fi

\ifx\proof\undefined
\newenvironment{proof}{\par\noindent{\bf Proof\ }}{\hfill\BlackBox\\[2mm]}
\fi
\ifx\proofsketch\undefined
\newenvironment{proofsketch}{\par\noindent{\bf Proof Sketch\ }}{\hfill\BlackBox\\[2mm]}
\fi

\definecolor{rosso}{RGB}{220,57,18}
\definecolor{giallo}{RGB}{255,153,0}
\definecolor{blu}{RGB}{102,140,217}
\definecolor{verde}{RGB}{16,150,24}
\definecolor{viola}{RGB}{153,0,153}
\definecolor{babyblue}{RGB}{0,129,255}
\definecolor{darkgreen}{RGB}{6,148,60} 
\definecolor{darkblue}{rgb}{0.0,0.0,0.7} 

\providecommand*\url[1]{\href{#1}{#1}} 
\renewcommand*\url[1]{\href{#1}{\texttt{#1}}} 

%% file: preamble/commands_project.tex
\definecolor{whatcolor}{RGB}{153,204,255}
\definecolor{howcolor}{RGB}{255,204,153}
\definecolor{whycolor}{RGB}{204,255,153}
\definecolor{edgecolor}{RGB}{102,102,102}

\newcommand{\data}{\sethlcolor{whatcolor}\hl{data}\sethlcolor{yellow}\xspace}
\newcommand{\model}{\sethlcolor{howcolor}\hl{model}\sethlcolor{yellow}\xspace}
\newcommand{\decision}{\sethlcolor{whycolor}\hl{decision}\sethlcolor{yellow}\xspace}

\newcommand{\Data}{\sethlcolor{whatcolor}\hl{Data}\sethlcolor{yellow}\xspace}
\newcommand{\Model}{\sethlcolor{howcolor}\hl{Model}\sethlcolor{yellow}\xspace}
\newcommand{\Decision}{\sethlcolor{whycolor}\hl{Decision}\sethlcolor{yellow}\xspace}

\setlist[itemize]{left=3pt, topsep=2pt, partopsep=2pt, parsep=0pt, itemsep=2pt}
\setlist[enumerate]{left=3pt, topsep=2pt, partopsep=2pt, parsep=0pt, itemsep=2pt}

\newcommand{\doop}{\text{do}}

\usepackage{tcolorbox}

\tcbset{
    colback=gray!5!white, 
    colframe=white!45!black, 
    boxrule=0.5pt, 
    top=1.0mm, bottom=1.0mm, 
    left=1.5mm, right=1.5mm, 
    boxsep=1.0mm, 
}

\newcommand{\rev}[1]{\begingroup\color{black}#1\endgroup}

%% file: _sections/00_abstract.tex
The demand for Explainable AI (XAI) has triggered an explosion of methods, producing a landscape so fragmented that we now rely on surveys of surveys.
Yet, fundamental challenges persist: conflicting metrics, failed sanity checks, and unresolved debates over robustness and fairness.
\emph{The only consensus on how to achieve explainability is a lack of one.}
This has led many to point to the absence of a ground truth for defining ``the'' correct explanation as the main culprit.

\textbf{This position paper posits that} the persistent discord in XAI arises not from an absent ground truth but from a ground truth that exists, albeit as an elusive and challenging target: the \emph{causal model} that governs the relevant system.
By reframing XAI queries about \emph{data}, \emph{models}, or \emph{decisions} as causal inquiries, we prove the necessity and sufficiency of causal models for XAI.
We contend that without this causal grounding, XAI remains unmoored.
Ultimately, we encourage the community to converge around advanced \emph{concept} and \emph{causal discovery} to escape this entrenched uncertainty.

%% file: _sections/01_introduction.tex
\section{Introduction}

\input{_sections/fig_questions}

As early as the 1980s, the challenge of explainable AI (XAI) has been recognized as both critical and ambiguously defined~\citep{kodratoff1994comprehensibility}.
Numerous attempts to tackle this issue have led to a diverse array of methods, which are organized and categorized across various surveys.
Notable works include those focusing specifically on neural networks~\citep{yosinski2015understanding, montavon2018methods, samek2021explaining}, as well as broader surveys addressing explainable AI in general~\citep{doshi2017towards, dovsilovic2018explainable, hoffman2018metrics, guidotti2018survey, lipton2018mythos, adadi2018peeking, gilpin2018explaining, miller2019explanation, gunning2019xai, du2019techniques, tjoa2020survey, arrieta2020explainable, carvalho2019machine, murdoch2019definitions}.
With each of these survey papers exceeding 1,000 citations, it's perhaps enough to warrant a survey of surveys~\citep{speith2022review}.

Despite the very many attempts, the field continues to grapple with fundamental questions.
The definitions of \emph{explainability} and \emph{interpretability} may not always be agreed upon~\citep{preece2018stakeholders, ehsan2024social, leblanc2024seeking, namatevs2022interpretability, marcinkevivcs2020interpretability}, and debates over accuracy-explainability tradeoffs have split the community into proponents of \emph{inherent} vs. \emph{post-hoc} explainability approaches~\citep{rudin2019stop, gunning2019darpa, laugel2019dangers}. 
The lack of consensus over definitions and methodologies is further compounded by concerns over fairness~\citep{von2022fairness}, robustness~\citep{yeh2019fidelity, ghorbani2019interpretation, kindermans2019reliability, hamon2020robustness}, privacy violations~\citep{shavit2020extracting}, and the susceptibility of explanations to being manipulated or fooled~\citep{dombrowski2019explanations, shavit2019extracting, heo2019fooling, slack2020fooling, sullivan2022explanation, wickstromflexibility}.
Due to the lack of ground truth explanations, the community has been compelled to pursue an axiomatic framework for defining explainability~\citep{sundararajan2017axiomatic, janizek2021explaining, amgoud2022axiomatic}, yet, despite their axiomatic appeal, later work has shown failures in essential sanity checks~\citep{adebayo2018sanity, tomsett2020sanity, karimi2023relationship}.

%
\textbf{This position paper argues that XAI is fundamentally a supervised problem where the target is reality itself—in short, XAI is causality in disguise.}
While acknowledging the difficulty of obtaining this world model, we argue that the real barrier to consensus in XAI lies in the field’s near-total disregard for actively seeking it. Causal assumptions, we contend, are essential to bring coherence to XAI by addressing core questions through a principled lens; without such assumptions, XAI methods risk providing explanations that lack rigor, reliability, or generalizability.
Several studies have highlighted the importance of causality in XAI, identifying specific areas where a causal foundation could improve existing methods.
\citet{karimi2020probabilistic, karimi2021algorithmic} advocate for incorporating causal relationships into counterfactual explanations to enable actionable outcomes, while \citet{chou2021counterfactuals} and \citet{baron2023counterfactuals} critique existing counterfactual methods for lacking causal grounding, which they argue leads to spurious correlations and incomplete explanations.
Similarly, \citet{carloni2023causality} highlight the absence of causality in current XAI as a critical limitation, emphasizing its necessity for building trust in AI systems.
Finally, \citet{beckers2022causal} highlights causality's potential for action-guiding explanations in XAI, and \citet{chen2023causal} propose integrating causal discovery into XAI methods to enhance interpretability, leading to more actionable explanations.
%
%
%
We argue that grounding XAI in causal reasoning offers a principled lens to unify and clarify the field’s fragmented objectives.

%% file: _sections/fig_questions.tex
\begin{figure}[!t]
    \centering
    \resizebox{\linewidth}{!}{
        \begin{tikzpicture}[>=stealth, thick]

            \pgfmathsetmacro{\xCol}{2.6}         
            \def\boxW{3.8cm}                     
            \def\qW{9.2cm}                       
            
            \pgfmathsetmacro{\yStep}{-2.2}       
            \pgfmathsetmacro{\qOffset}{-1.0}     
            \pgfmathsetmacro{\titleY}{1.0}       
            
            \pgfmathsetmacro{\yLvlThree}{0}
            \pgfmathsetmacro{\yLvlTwo}{\yStep}
            \pgfmathsetmacro{\yLvlOne}{2*\yStep}
            
            \pgfmathsetmacro{\qYThree}{\yLvlThree + \qOffset}
            \pgfmathsetmacro{\qYTwo}{\yLvlTwo + \qOffset}
            \pgfmathsetmacro{\qYOne}{\yLvlOne + \qOffset}

            \tikzstyle{xainode} = [
                rectangle,
                draw,
                minimum width=\boxW,
                minimum height=1cm,
                align=center,
                font=\bfseries\small,
                rounded corners=4pt
            ]
            \tikzstyle{causalnode} = [
                rectangle,
                draw=black!60,
                densely dashed,
                fill=gray!10,
                minimum width=\boxW,
                minimum height=1cm,
                align=center,
                font=\small,
                rounded corners=4pt 
            ]
            \tikzstyle{qnode} = [align=center, text width=\qW, font=\footnotesize, text=black!90]
            \tikzstyle{coltitle} = [font=\bfseries\normalsize]

            \node[coltitle] at (-\xCol, \titleY) {Explainable AI};
            \node[coltitle] at (\xCol, \titleY) {Causality};

            \node[xainode, fill=whycolor] (decision) at (-\xCol, \yLvlThree) {Decision-Based\\(``Why (not)?'', $Y$)};
            \node[causalnode] (cf) at (\xCol, \yLvlThree) {\textbf{3. Counterfactual}\\$Y_{X=x'}(u)$};
            
            \draw[<->, line width=1.2mm, gray!40] (decision) -- (cf);
            
            \node[qnode] (q56) at (0, \qYThree) {
                \textbf{Q6:} Why would the \decision differ if the input had been different?\\
                \textbf{Q5:} Why does the model make a specific \decision for a given input?
            };

            \node[xainode, fill=howcolor] (model) at (-\xCol, \yLvlTwo) {Model-Based\\(``How?'', $f$)};
            \node[causalnode] (int) at (\xCol, \yLvlTwo) {\textbf{2. Interventional}\\$P(Y \mid do(X))$};
            
            \draw[<->, line width=1.2mm, gray!40] (model) -- (int);
            
            \node[qnode] (q34) at (0, \qYTwo) {
                \textbf{Q4:} How do the \model's internal mechanisms function?\\
                \textbf{Q3:} How does the \model transform inputs into outputs?
            };

            \node[xainode, fill=whatcolor] (data) at (-\xCol, \yLvlOne) {Data-Based\\(``What?'', $X$)};
            \node[causalnode] (obs) at (\xCol, \yLvlOne) {\textbf{1. Observational}\\$P(X,Y)$};
            
            \draw[<->, line width=1.2mm, gray!40] (data) -- (obs);
            
            \node[qnode] (q12) at (0, \qYOne) {
                \textbf{Q2:} What underlying factors generate the \data?\\
                \textbf{Q1:} What explains the distribution of the \data?
            };

        \end{tikzpicture}
    }
    \vspace{-4mm}
    \caption{
        %
        Core methods in XAI for explaining an ML model ($f : X \rightarrow Y$) are categorized by purpose into \data-based, \model-based, and \decision-based questions. By mapping these directly onto Pearl's Ladder of Causation, we reveal that solving XAI fundamentally requires answering causal inquiries.
    }
    \label{fig:questions}
    \vspace{-4mm}
\end{figure}
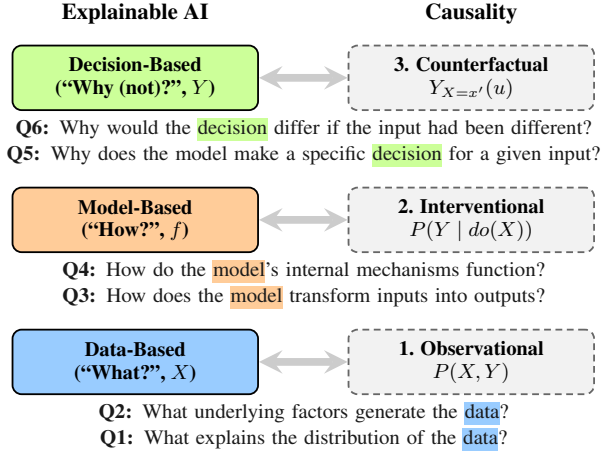

%% file: _sections/02_background_xai.tex
\section{Unifying XAI through Causal Semantics}
\label{sec:xai}

To understand the role of causality in explainable AI (XAI), we first categorize existing XAI methods based on the primary purpose of their explanations: the data $X$, the model $f$,\footnote{Here, $f$ represents the predictive model to be explained, distinct from the causal model of the world, $\Mcal$.} or the decisions $Y$.
As in \cref{fig:questions}, these questions can be organized into three categories of questions based on purpose:
\begin{itemize}
    \item \textbf{\Data-based (``What?'')}: Uncovering the structure and significance of the data $X$.
    \item \textbf{\Model-based (``How?'')}: Exploring how the model $f$ transforms input $X$ into output $Y$.
    \item \textbf{\Decision\footnote{Like \citet{miller2019explanation}, we use ``decision'' broadly as AI system outputs, such as categorizations or action choices.}-based (``Why (not)?'')}: Interpreting specific output $Y$ for given input $X$ and model $f$.
\end{itemize}
By structuring XAI methods within this framework, we highlight gaps due to a lack of causal grounding, setting a foundation for our argument that causality is essential for rigorous, valid XAI.

\subsection{\Data-Based Interpretability (``What?'')}

Data-based interpretability focuses on understanding the structure of the input data $X$, answering questions such as:
\begin{enumerate}[label=\emph{\textbf{Q\arabic*}}:, start=1]
    \item \emph{``What explains the distribution of the data?''}
    \item \emph{``What underlying factors generate the data?''}
\end{enumerate}

\noindent Data-based interpretability methods are particularly useful for exploratory data analysis and in contexts where understanding biases or clusters within the data is crucial, e.g.,

\begin{itemize}
    

    \item \textbf{Low-Dimensional Representations}
    methods such as Dimensionality Reduction (e.g., PCA, t-SNE~\citep{maaten2008visualizing}) and Manifold Learning (e.g., UMAP~\citep{mcinnes2018umap}, Isomap~\citep{tenenbaum2000global}) map high-dimensional data, $X$ to lower-dimensional spaces, revealing latent structure, clusters, or trends that explain the distribution of the data and impact predictions.

    \item \textbf{Clustering \& Group Discovery}
    methods such as k-means, DBSCAN~\cite{schubert2017dbscan}, and hierarchical clustering~\citep{murtagh2012algorithms} identify coherent subgroups within $X$, often reflecting hidden regimes, user subtypes, or covariate shifts, and their boundaries can influence fairness, recourse, or generalization.
    
    \item \textbf{Density \& Generative Modeling}
    techniques like Kernel Density Estimation~\citep{wkeglarczyk2018kernel}, Normalizing Flows~\citep{rezende2015variational}, or Variational Autoencoders (VAEs)~\citep{kingma2013auto} aim to estimate $p(X)$ or model the generative process behind the data. These probabilistic models capture the underlying distributional characteristics of $X$, enabling the detection of anomalies, rare events, and representative modes.
    
\end{itemize}


These method families align with the broader goal of understanding the intrinsic structure of data, and many share conceptual ties with \emph{causal discovery}—the identification of generative mechanisms and dependencies within $X$ itself~\citep{pearl2009causality, spirtes2001causation}.
Low-dimensional embeddings and manifold learning reveal smooth, interpretable representations of high-dimensional data; clustering highlights natural groupings; and density estimators and generative models characterize the full data distribution. 
Collectively, these tools expose latent patterns, confounders, and biases in the data that may shape model behavior.
They provide an essential foundation for explainability in AI systems, offering model-agnostic insights that complement model-based explanations like attention or attribution.

\subsection{\Model-Based Interpretability (``How?'')}

Model-based interpretability seeks to explain the function $f$, specifically how the model processes input $X$ to produce output $Y$. This category addresses questions such as:
\begin{enumerate}[label=\emph{\textbf{Q\arabic*}}:, start=3]
    \item \emph{``How does the model transform inputs into outputs?''}
    \item \emph{``How do the model's internal mechanisms function?''}
\end{enumerate}

\noindent Model-based interpretability is essential in regulatory and high-stakes environments where transparency into $f$’s workings is required.
These methods include:

\begin{itemize}

    \item \textbf{Feature Interaction}
    methods (e.g., Partial Dependence Plots~\citep{friedman2001greedy}, Accumulated Local Effects~\citep{apley2020visualizing}) explore interactions within $f$ by showing how different features affect $Y$. Partial Dependence Plots, for instance, illustrate the effect of one or two features on $Y$ while other features are kept constant, revealing interactions in $f$.
    
    \item \textbf{Feature Attribution}
    methods (e.g., LIME~\citep{ribeiro2016should}, SHAP~\citep{lundberg2017unified}) decompose $f(X)$ to assign an importance score to each feature in $X$, indicating its contribution to the output $Y$. Some works interpret these attributions as estimates of local (individual) causal effects~\citep{chattopadhyay2019neural}, suggesting that LIME can be approximated via input gradients in sufficiently smooth regions.

    \item \textbf{Saliency and Visualization}
    methods (e.g., Saliency Maps~\citep{simonyan2013deep}, Grad-CAM~\citep{selvaraju2016grad}) visualize gradients to identify important regions in $X$ that affect $Y$, such as which image pixels are influential in a prediction. Grad-CAM, for example, generates a heatmap highlighting image regions that impact the model’s output.
    

    \item \textbf{Surrogate and Simplified Models}
    aim to approximate complex models $f$ in specific regions using \emph{inherently interpretable} models (e.g., decision trees, linear models). \citet{towell1993extracting} extract rules to enhance interpretability in neural networks, and LIME provides local explanations through linear models~\citep{ribeiro2016should}. While MASALA adapts locality for improved fidelity~\citep{anwar2024masala}, MaLESCaMo introduces causal surrogate models~\citep{termine2023machine}, and \citet{laugel2018defining} focus on locality for surrogates in post-hoc interpretability.

    \item \textbf{Model-Intrinsic Interpretability}
    approaches use interpretable models like linear models, decision trees, and rule-based systems, allowing direct inspection of $f$'s parameters to understand how $X$ maps to $Y$ without post-hoc explanations. For instance, Generalized Additive Models (GAMs) model responses as sums of functions of predictors~\citep{hastie1987generalized}. The Bayesian Case Model uses representative cases for interpretability~\citep{kim2014bayesian}, while the Bayesian Rule Set framework learns interpretable rule sets~\citep{wang2017bayesian}. Interpretable Decision Sets provide a joint framework for description and prediction, facilitating comprehensible decision-making processes~\citep{lakkaraju2016interpretable}.

\end{itemize}

\rev{Mechanistic interpretability represents a prime example of successful, exact causal discovery on a model's computation graph. However, because tracking billions of low-level nodes does not scale to human comprehension, it highlights the urgent need for \textit{causal abstraction models} \cite{geiger2023causal} that group these low-level mechanisms into high-level, human-understandable causal variables.}

These approaches are closely related to understanding \textit{causal mechanisms}~\citep{pearl2000models, peters2017elements}—the specific processes through which changes in input features $X$ influence the output $Y$.
By attributing importance to features, analyzing interactions, and approximating internal model logic, these methods help uncover the pathways within $f$ that drive model predictions.
For example, feature attribution methods quantify each feature’s contribution to $Y$, aligning with causal mechanisms by revealing how particular inputs influence the model’s output.
Similarly, saliency maps and feature interaction methods highlight key regions and feature dependencies within $f$, providing an interpretative view of how the model operates.
This mechanistic understanding is essential in domains where stakeholders care to see not only which features matter but also how they interact to produce predictions.

\subsection{\Decision-Based Interpretability (``Why (not)?'')}

Decision-based interpretability focuses on explaining specific outputs $Y$ for given inputs $X$ and model $f$, addressing questions such as:
\begin{enumerate}[label=\emph{\textbf{Q\arabic*}}:, start=5]
    \item \emph{``Why does the model make a specific decision for a given input?''}
    \item \emph{``Why would the decision differ if the input had been different?''}
\end{enumerate}

\noindent Decision-based interpretability is valuable in applications where understanding the rationale behind individual decisions and possible alternatives is crucial, such as in personalized recommendations or legal judgments. Example methods include:

\begin{itemize}
    \item \textbf{Counterfactual and Example-based Methods}~\citep{wachter2017counterfactual} illustrate what minimal changes to $X$ would be necessary to alter the output $Y$, providing insight into decision boundaries by showing hypothetical scenarios in which the decision would differ.

    \item \textbf{Post-hoc Concept-based Explanation Methods} (e.g., TCAV)~\citep{kim2018interpretability} explain $Y$ in terms of high-level human-defined concepts, rather than individual features of $X$. TCAV, for example, assesses the relevance of specific concepts (like ``striped'' or ``curved'') to a prediction, offering an interpretable, concept-level explanation.

\end{itemize}

These methods draw on concepts from \textit{actual causality}~\citep{halpern2016actual} by using \textit{counterfactual reasoning} to explore why a particular outcome was reached. Halpern and Pearl's causal model formalizes this approach, defining causes through counterfactual dependencies that clarify necessary and sufficient conditions for an outcome~\citep{halpern2005causes}. In practical terms, answering ``why'' questions involves identifying the minimal changes in $X$ that would alter $Y$, thereby uncovering the causal factors influencing the decision~\citep{lipton1990contrastive}.
%
Counterfactual reasoning provides actionable insights, as it clarifies the conditions under which an alternative outcome could occur.
This concept of causality has also been extended by~\citet{woodward2005making}, who argues that interventions and counterfactuals provide a foundation for understanding causal explanations and model behavior. By leveraging such causal insights, decision-based interpretability approaches not only highlight decision boundaries but also enhance understanding of model outcomes and potential user actions.

\rev{To operationalize this taxonomy for natural language queries, one must identify the target of the inquiry. Questions regarding biases in the environment (e.g., ``Why are most successful applicants male?'') map to Q1/Q2 (the world's SCM). Questions regarding internal architecture or feature weights map to Q3/Q4 (the model's SCM). Questions regarding a specific user outcome (e.g., ``How can I change my loan rejection?'') map to Q5/Q6 (Decision).}

This purpose-driven categorization of \data-based, \model-based, and \decision-based XAI methods structures the response to XAI questions posed in~\cref{fig:questions}.
However, lacking causal assumptions limits robustness and generalizability across contexts.
Below, we introduce causal foundations and explore how causal models address these XAI gaps.
We will also see how existing lines of circuit-based interpretability and causal abstraction further strengthen the claim that \emph{explanation is causal discovery in disguise}.

%% file: _sections/03_background_scm.tex
\section{Background on Causality}

Causality aims to model the relationships between variables where one variable causes changes in another, thereby going beyond mere statistical correlations to capture the underlying mechanisms of the data-generating process. Unlike correlations, causal relationships entail directional influence, allowing one to predict the effect of interventions and counterfactuals in the system~\citep{pearl2009causality}.
Multiple frameworks formalize causality, including the Potential Outcomes framework~\citep{rubin2005causal}, Graphical Models~\citep{spirtes2001causation}, and Structural Causal Models (SCMs)~\citep{pearl2009causality}, each offering unique perspectives on understanding causation. For the purposes of this work, we adopt Pearl's SCM framework, as it provides a rigorous formalism for reasoning about causal mechanisms, interventions, and counterfactuals—critical components for constructing XAI systems.
We formalize the claim that access to the true causal model, represented as an SCM, is both \emph{sufficient} and \emph{necessary} for addressing purpose-driven methods on the ``What?'', ``How?'', and ``Why (not)?'' of explanations.
%
%
To ground our claims, we define key concepts and notations employed throughout this section (more in~\cref{app:more_causality}).

\begin{definition}[Structural Causal Model (SCM)]
    An SCM $\mathcal{M}$ is a tuple $\langle \mathbf{U}, \mathbf{V}, \mathbf{F}, P(\mathbf{U}) \rangle$, where:
    
    \begin{itemize}
        \item $\mathbf{U} = \{U_1, U_2, \dots, U_m\}$ is a set of exogenous variables.
        \item $\mathbf{V} = \{V_1, V_2, \dots, V_n\}$ is a set of endogenous variables.
        \item $\mathbf{F}$ is a set of structural equations ${ f_V : V \in \mathbf{V} }$, where each $f_V$ maps the parents of $V$ and relevant exogenous variables to $V$, i.e., $V = f_V(\operatorname{pa}(V), U_V)$. $\mathbf{F}$ specifies the causal mechanisms underlying the data-generating process, providing a mechanistic description of causal relationships.
        \item $P(\mathbf{U})$ is a joint probability distribution over the exogenous variables $\mathbf{U}$.
    \end{itemize}
\end{definition}

\rev{In the context of a deterministic neural network, the endogenous variables ($V$) are the layers and outputs, while the exogenous variables ($U$) represent the specific properties of the input instance $X=x$ that are fixed for that forward pass, alongside any external stochasticity (e.g., dropout masks).}

\begin{definition}[Causal Graph]
    The causal graph $\mathcal{G}$ associated with an SCM $\mathcal{M}$ is a directed acyclic graph (DAG) where nodes represent variables in $\mathbf{V}$, and edges represent direct causal relationships as specified by the structural equations in $\mathbf{F}$. $\mathcal{G}$ provides a visual representation of causal dependencies and is a fundamental tool for identifying causal pathways and potential confounders~\citep{pearl2009causality, spirtes2001causation}.
\end{definition}

\begin{definition}[Observ., Interv., and Counterf. Queries]
    Access to an SCM $\mathcal{M}$ enables analysis of three primary types of queries, each offering unique insights into the relationships captured by $\mathcal{M}$:
    
    \begin{itemize}
        \item \textbf{Observational Queries}: These involve probabilities computed from the observed data distribution $P(\mathbf{V})$. They describe associations between variables as observed without external manipulation and are limited to capturing correlations rather than causation.
        
        \item \textbf{Interventional Queries}: Interventions modify the underlying structural equations in $\mathbf{F}$ to estimate causal effects. Such interventions are denoted by the \textit{do}-operator, $\doop(\cdot)$, representing an exogenous alteration that severs the usual dependence of a variable on its causal parents, allowing for predictions under manipulated conditions. For example, the query $P(Y = y | \doop(X = x))$ estimates the probability of $Y = y$ when $X$ is set to $x$ by intervention~\citep{pearl2009causality}.
        
        \item \textbf{Counterfactual Queries}: Counterfactual queries explore hypothetical scenarios that diverge from observed reality, posing ``what if'' questions about alternative outcomes. For a given observed outcome, counterfactual reasoning considers what the outcome would have been had certain variables taken different values. This requires conditioning on observed data to infer observed values exogenous variables, $U=u$, and then modifying variables, $X=x'$, to then predict $Y_{X = x'}(u)$ counterfactuals~\citep{pearl2009causality, rubin2005causal}---a process otherwise known as the abduction–action–prediction paradigm. 
    \end{itemize}
\end{definition}

\begin{definition}[Causal Discovery]
    Unlike the queries above which presuppose a causal model, causal discovery~\citep{eberhardt2017introduction, glymour2019review, malinsky2018causal, nogueira2022methods, spirtes2016causal, vowels2022d} aims to infer the causal graph $\mathcal{G}$ from observational or experimental data, an essential step for constructing accurate causal models. This process faces challenges, including latent confounders, data scarcity, and reliance on assumptions like causal sufficiency. Methods for causal discovery include constraint-based approaches (e.g., PC algorithm)~\citep{spirtes2001causation}, score-based methods~\citep{huang2018generalized}, and functional causal models (e.g., additive noise models)~\citep{peters2017elements}. The ability to uncover causal relationships is crucial for XAI, as it directly affects the fidelity of the explanations generated.
\end{definition}

%% file: _sections/04_proof.tex
\section{Sufficiency and Necessity of Causality for Explainable AI} \label{sec:sufficiency-necessity}

\noindent In the following theorems, we first formalize the sufficiency claim, followed by the necessity claim.

\input{_sections/041_sufficiency_sketch}

\vspace{-7mm}
\input{_sections/042_necessity_sketch}

\vspace{-7mm}

\input{_sections/043_limitations}

%% file: _sections/041_sufficiency_sketch.tex

\begin{definition}[Accurate and Complete Answers to Q1-6]
Following \citet{pearl2009causality}, we say an answer to any of the six core XAI questions (Q1–Q6 in \cref{fig:questions}) is \emph{accurate and complete} if it coincides exactly with what the \textbf{true} Structural Causal Model (SCM) $\mathcal{M}$ predicts for that query. Concretely:
\begin{itemize}[leftmargin=*]
    \item \textbf{Observational correctness (Q1, Q2)}: The distribution of observed variables and the underlying generating factors match those in $\mathcal{M}$.
    \item \textbf{Interventional correctness (Q3, Q4)}: The effect of manipulating inputs or tracing internal mechanisms reflects the causal structure of $\mathcal{M}$.
    \item \textbf{Counterfactual correctness (Q5, Q6)}: The counterfactual outcome $Y_{X=x'}(u)$ for a specific exogenous state $u$ matches the counterfactual computed under $\mathcal{M}$.
\end{itemize}
\end{definition}

\begin{theorem}[Sufficiency of the True SCM for XAI]
\label{thm:sufficiency}
    Let $\mathcal{M} = \langle \mathbf{U}, \mathbf{V}, \mathbf{F}, P(\mathbf{U}) \rangle$ be the unique \emph{true} Structural Causal Model of the data-generating process. Under standard assumptions (acyclicity, no unmeasured confounders, well-defined exogenous variables), having full access to $\mathcal{M}$ is \textbf{sufficient} to provide accurate and complete answers to the six core XAI questions (Q1–Q6) depicted in \cref{fig:questions}.
\end{theorem}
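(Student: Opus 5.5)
The plan is to prove the theorem one rung at a time. For each pair of questions (Q1--Q2, Q3--Q4, Q5--Q6), I would give an explicit procedure that takes the true SCM $\mathcal{M}$ as input and returns exactly the quantity that the definition of accurate and complete answers requires. Since that definition already calls an answer correct when it \emph{coincides} with what $\mathcal{M}$ predicts, most of the work is showing that each required quantity is well defined and computable from $\mathcal{M}$. That requires existence and uniqueness of solutions to the structural equations, which I would derive first from the stated assumptions.

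The preliminary step uses acyclicity. Acyclicity gives a topological order on $\mathbf{V}$. Therefore, for every exogenous setting $\mathbf{u}$, the equations $V=f_V(\operatorname{pa}(V),U_V)$ have a unique solution $\mathbf{V}(\mathbf{u})$, obtained by recursive substitution. The same holds for every submodel $\mathcal{M}_{x}$ in which $f_X$ is replaced by the constant $x$, because deleting incoming edges preserves acyclicity. Well-defined exogenous variables, with measurable $\mathbf{F}$, then make $\mathbf{u}\mapsto\mathbf{V}(\mathbf{u})$ a measurable map.

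The rungs then go as follows.

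\textbf{Q1--Q2 (observational).} The observational distribution is the pushforward $P(\mathbf{V})=P(\mathbf{U})\circ\mathbf{V}(\cdot)^{-1}$. This answers Q1, and the graph $\mathcal{G}$ read off from $\mathbf{F}$, together with the mechanisms themselves, answers Q2. Under no unmeasured confounders, the $U_V$ are mutually independent, so $P(\mathbf{V})$ factorizes as $\prod_V P(V\mid\operatorname{pa}(V))$ (the Markov property). This identifies exactly which factors generate the data.

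\textbf{Q3--Q4 (interventional).} I define $P(Y\mid\doop(X=x))$ as the pushforward of $P(\mathbf{U})$ through the submodel $\mathcal{M}_x$. For Q4, I would invoke the paper's reading of a network as an SCM, with layers as endogenous variables. Interventions on internal nodes are then again submodels, so path-specific and mechanism-level effects are computable in the same way. The truncated factorization gives a closed form.

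\textbf{Q5--Q6 (counterfactual).} I would follow abduction--action--prediction. First, compute the posterior $P(\mathbf{U}\mid \mathbf{V}=\mathbf{v})$, or the exact $u$ when the evidence determines it. Then form $\mathcal{M}_{x'}$. Finally, evaluate $Y_{x'}(u)$, or its posterior expectation. Q5 corresponds to $x'=x$, where consistency gives $Y_x(u)=Y(u)$, together with necessity/sufficiency contrasts. Q6 corresponds to general $x'$.

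The main obstacle is abduction at the counterfactual rung. When $\mathbf{U}$ is not a deterministic function of the observed $\mathbf{v}$, the counterfactual is a distribution $P(Y_{x'}\mid\mathbf{v})$, not a single value. I would handle this by reading ``counterfactual for a specific exogenous state $u$'' as the definition requires: either $u$ is given, or the answer is the posterior-weighted counterfactual. In both cases the answer is fully determined by $\mathcal{M}$, because the conditional $P(\mathbf{U}\mid\mathbf{v})$ is defined, via regular conditional probability, whenever $P(\mathbf{U})$ is a well-defined distribution on a standard Borel space. A secondary subtlety is conditioning on null events. I would restrict to $\mathbf{v}$ in the support of $P(\mathbf{V})$, which is exactly what ``accurate and complete'' demands.
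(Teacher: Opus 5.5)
Your proposal is correct and follows essentially the same question-by-question route as the paper's proof. Your pushforward for Q1--Q2 is the paper's ``law of structural models'' (its Dirac-delta integral). For Q3--Q4 you propagate through $\mathbf{F}$ with internal layers as endogenous variables, making the intervention explicit via the submodel $\mathcal{M}_x$ where the paper simply evaluates the structural equations recursively. Q5--Q6 use abduction--action--prediction in both. Your well-posedness step from acyclicity makes rigorous what the paper leaves implicit.

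One small caveat concerns your posterior branch. A regular conditional probability $P(\mathbf{U}\mid\mathbf{v})$ is unique only $P(\mathbf{V})$-almost surely. So for continuous $\mathbf{V}$, restricting to $\mathbf{v}$ in the support does not pin it down at a single point without extra regularity (e.g.\ continuous densities). This does not affect the result: the paper's definition of counterfactual correctness is stated for a given exogenous state $u$, so your first branch already suffices.
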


\begin{proofsketch}
\textit{(Full proof in App.~\ref{app:sufficiency-necessity})} Since $\mathcal{M}$ specifies:
\begin{enumerate}[leftmargin=*]
    \item The causal graph $\mathcal{G}$ over the endogenous variables $\mathbf{V}$,
    \item A set of structural equations $\mathbf{F}$ indicating how each $V_i \in \mathbf{V}$ depends on its parents $\operatorname{pa}(V_i)$ and possibly exogenous $U_{V_i}$,
    \item The distribution $P(\mathbf{U})$ over the exogenous variables,
\end{enumerate}
it uniquely determines the joint distribution of all variables, any interventional distribution via the do-operator $\doop(\cdot)$, and any counterfactual query via abduction–action–prediction \citep{pearl2009causality}.
Mapping these distributions to to Q1–Q6:

\begin{itemize}[leftmargin=*]
    \item \textbf{Q1 (Distribution of data) \& Q2 (Underlying factors).} The law of structural models allows us to derive $P(\mathbf{V})$ exactly from $\mathcal{M}$, and we see how exogenous variables $\mathbf{U}$ and functions $f_{V_i}$ generate the observed data.

    \item \textbf{Q3 (How does the model process inputs?) \& Q4 (How do internal mechanisms operate?).} By tracing causal pathways in $\mathcal{G}$ (and applying $\mathbf{F}$ iteratively), we reveal how input $X$ propagates to output $Y$ through intermediate variables (hidden layers or sub-modules).

    \item \textbf{Q5 (Why a specific decision?) \& Q6 (Why would the decision differ?)}. 
        Given $(X = x, Y = y)$, we infer exogenous $\mathbf{u}$ (\emph{abduction}), modify $X \leftarrow x'$ (\emph{action}), and compute $Y_{X = x'}(\mathbf{u})$ (\emph{prediction}), explaining both why the model made its decision and how it would change under a different input.
\end{itemize}

\vspace{-2mm}

Because $\mathcal{M}$ yields precise observational, interventional, and counterfactual results, it provides complete and accurate explanations for all six questions. Thus, knowing the true SCM is \emph{sufficient} for XAI.
%
\end{proofsketch}

%% file: _sections/042_necessity_sketch.tex

\begin{assumption}[Q1--Q6 as a Separating Family]
\label{assum:separating_family}
    Let $\mathfrak{Q} = \{Q1, \dots, Q6\}$ denote the query functionals implied by Fig.~1. We assume $\mathfrak{Q}$ is separating for the model class under consideration. That is, for any two SCMs $\mathcal{M} \neq \hat{\mathcal{M}}$ in the class, there exists a query $Q \in \mathfrak{Q}$ such that $Q(\mathcal{M}) \neq Q(\hat{\mathcal{M}})$.
\end{assumption}

\begin{theorem}[Necessity of the True SCM for XAI]
\label{thm:necessity}
    Suppose a dataset $\mathbf{V}$ is generated by a \emph{true} but unknown SCM $\mathcal{M}$. If an alternative model $\hat{\mathcal{M}}$ does not match $\mathcal{M}$ in at least one structural equation or in its exogenous distribution $P(\mathbf{U})$, then there exists at least one of the six XAI questions (Q1–Q6) for which $\hat{\mathcal{M}}$ cannot provide an accurate and complete answer.
\end{theorem}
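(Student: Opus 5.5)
The plan is to argue by contraposition and reduce the statement directly to \cref{assum:separating_family}, using the definition of accurate and complete answers. Suppose $\hat{\mathcal{M}}$ differs from the true $\mathcal{M}$ in at least one structural equation $\hat f_{V} \neq f_V$ or in $\hat P(\mathbf{U}) \neq P(\mathbf{U})$. The first step is to make precise that this means $\hat{\mathcal{M}} \neq \mathcal{M}$ as elements of the model class. Equality is taken as equality of the tuple $\langle \mathbf{U}, \mathbf{V}, \mathbf{F}, P(\mathbf{U}) \rangle$, up to whatever equivalence the class already quotients by, e.g.\ relabelling of exogenous variables. The hypothesis of \cref{thm:necessity} then gives exactly $\hat{\mathcal{M}} \neq \mathcal{M}$.

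The second step invokes \cref{assum:separating_family}. Both models lie in the class, with $\mathcal{M}$ satisfying the standard assumptions of \cref{thm:sufficiency}, and $\hat{\mathcal{M}}$ being a candidate explanation model in the same class. So there is some $Q \in \mathfrak{Q}$ with $Q(\hat{\mathcal{M}}) \neq Q(\mathcal{M})$. I will split by the rung of \cref{fig:questions} to which $Q$ belongs, mainly to make the statement concrete.
\begin{itemize}
\item \textbf{Observational ($Q \in \{Q1, Q2\}$).} Here $\hat P(\mathbf{V}) \neq P(\mathbf{V})$, or the generating factors differ.
\item \textbf{Interventional ($Q \in \{Q3, Q4\}$).} Some $\hat P(Y \mid \doop(X)) \neq P(Y \mid \doop(X))$, or some mechanism traced along $\mathcal{G}$ differs.
\item \textbf{Counterfactual ($Q \in \{Q5, Q6\}$).} Some $\hat Y_{X=x'}(u) \neq Y_{X=x'}(u)$.
\end{itemize}

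The third step closes the argument. By the definition of accurate and complete answers, an answer is accurate and complete precisely when it coincides with the value the true $\mathcal{M}$ assigns to the query. The answer $\hat{\mathcal{M}}$ produces for $Q$ is $Q(\hat{\mathcal{M}})$, which differs from $Q(\mathcal{M})$, so $\hat{\mathcal{M}}$ fails on $Q$. Together with \cref{thm:sufficiency}, this shows the true SCM is characterized, within the class, as the unique model answering all of Q1--Q6 correctly.

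The main obstacle is not the logic, which is essentially immediate from the assumption, but the mismatch between ``differs in a structural equation'' and ``differs as an SCM''. Two SCMs can have different $\mathbf{F}$ or $P(\mathbf{U})$ yet be observationally, interventionally and even counterfactually equivalent. Examples are reparametrizing $U$ through a bijection absorbed into $f_V$, or changing $f_V$ only on a $P(\mathbf{U})$-null set. Without care, such an $\hat{\mathcal{M}}$ would be a counterexample. I would handle this first by reading the model class in \cref{assum:separating_family} as equivalence classes of SCMs modulo these trivial reparametrizations and null-set modifications. The hypothesis ``does not match $\mathcal{M}$'' would then be interpreted as non-equivalence. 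I would state explicitly that the theorem is only as strong as this separating assumption, noting this as a limitation rather than deriving separation from first principles.
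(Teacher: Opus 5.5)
Your argument is correct. The main-text sketch of \cref{thm:necessity} has the same skeleton as yours: contradiction, then a split into observational, interventional and counterfactual rungs. At the decisive step, however, it only says that a differing $\hat{\mathcal{M}}$ \emph{may} induce a different $P(\mathbf{V})$ or \emph{can} give different $\doop$-outcomes, and then concludes ``hence'' some answer diverges, without citing \cref{assum:separating_family}.

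The appendix proof is organized differently. It goes question by question and argues that Q1--Q6 cannot be answered with only non-causal, observational information, citing fork/chain/collider structures yielding similar associations, confounding and reverse causation, and the impossibility of abduction without $\mathbf{F}$. That is really a claim about the \emph{absence} of $\mathcal{M}$, not about a specific mis-specified $\hat{\mathcal{M}}$. The separating assumption enters only in a closing note, which argues that it holds because distinct members of a Markov equivalence class differ in an edge orientation that some intervention detects.

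Your route makes the theorem an immediate corollary of the assumption plus the definition of accurate and complete answers. This addresses the statement as written and isolates exactly where its content lies. The paper's route buys intuition for why each rung needs causal information, and it attempts to justify separation rather than posit it.

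Your reparametrization and null-set examples expose what that justification misses. The edge-orientation argument only separates models with different graphs. The theorem's hypothesis, by contrast, covers models that may share $\mathcal{G}$ and differ only in $\mathbf{F}$ or $P(\mathbf{U})$. For such models, no observational, interventional or evidence-conditioned counterfactual quantity need change.

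Two refinements.
\begin{itemize}
\item Whether those examples are genuine counterexamples depends on how Q2 is read. If ``the underlying generating factors match'' means literal equality of $\mathbf{F}$ and $P(\mathbf{U})$, then Q2 alone separates any two distinct tuples and the theorem is tautological. Only under a reparametrization-invariant reading does the assumption do real work.
\item You need not rewrite the theorem's hypothesis. Read literally on tuples, \cref{assum:separating_family} already forces the class to contain no two distinct but query-equivalent SCMs (e.g.\ via a canonical parametrization of $\mathbf{U}$), so tuple inequality suffices. If you instead quotient the class, you must strengthen the hypothesis to non-equivalence, as you note.
\end{itemize}
Keep explicit, as you do, that $\hat{\mathcal{M}}$ must lie in the class; the statement leaves this implicit.
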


\begin{proofsketch}
\textit{(Full proof in App.~\ref{app:sufficiency-necessity})} Recall that accurate and complete answers require reproducing \emph{exactly} the observational, interventional, or counterfactual results from $\mathcal{M}$. We prove by contradiction:

\begin{enumerate}[leftmargin=*]
    \item \textbf{Assume} $\hat{\mathcal{M}}$ is a different SCM than $\mathcal{M}$ but still claims to yield correct answers for \emph{all} Q1–Q6.
    \item There are three broad query types:
        \begin{itemize}
            \item \textbf{Observational (Q1–Q2)}: If $\hat{\mathcal{M}}$ differs in $\mathbf{F}$ or $P(\mathbf{U})$, it may induce a different joint distribution over $\mathbf{V}$, 
            misidentifying underlying data factors (Q2).
            \item \textbf{Interventional (Q3–Q4)}: Even if $\hat{\mathcal{M}}$ matches observationally, the do-operator $\doop(X=x)$ can produce different outcomes in $\hat{\mathcal{M}}$ vs. $\mathcal{M}$ due to differences in causal structure or confounding assumptions \citep{pearl2009causality}.
            \item \textbf{Counterfactual (Q5–Q6)}: Counterfactual questions rely on abduction–action–prediction with the \emph{true} exogenous state. A mismatch in structural equations leads to different counterfactual results.
        \end{itemize}
    \item Hence, there must be at least one question Q1–Q6 where $\hat{\mathcal{M}}$’s answer diverges from $\mathcal{M}$’s. This contradicts the assumption that $\hat{\mathcal{M}}$ is correct for \emph{all} XAI questions.
\end{enumerate}

Thus, to ensure accuracy and completeness across all six questions, access to the true $\mathcal{M}$ is \emph{necessary} for XAI.
\end{proofsketch}

%% file: _sections/043_limitations.tex
\subsection{Discussion on Robustness and Limitations}

\paragraph{The Normative Ideal.}
\rev{We emphasize that Theorems \ref{thm:sufficiency} and \ref{thm:necessity} define a \textbf{normative ideal}. Defining a theoretical upper bound relative to an oracle SCM ($\mathcal{M} = \langle \mathbf{U}, \mathbf{V}, \mathbf{F}, P(\mathbf{U}) \rangle$) is not a tautology, but a necessary formalization to establish the target semantics of XAI. Just as the Bayes Optimal Classifier defines the theoretical limit of predictive modeling---an oracle we rarely access but continually approximate---the oracle SCM defines the theoretical limit of explainability. This frames all practical XAI methods as approximate causal discovery.}

\paragraph{Trading One Hard Problem for Another.}
However, in real-world applications, such oracle-level causal knowledge is rarely accessible. \rev{We recognize that by establishing this ideal, we are seemingly reducing the \emph{difficult problem of explainability} to the \emph{equally difficult problem of high-dimensional causal discovery}. However, we argue it is better for the community to tackle this hard problem explicitly than to continue developing yet another XAI method without a rigorous way to gauge its effectiveness. By making the causal ground truth explicit, we establish a principled metric for progress, discouraging the proliferation of heuristic methods that lack formal semantics.}

\paragraph{Challenges in Causal Discovery.}
Inferring the true $\mathcal{M}$ from data presents several well-known obstacles:
\begin{itemize}
    \item \textbf{Faithfulness and Causal Sufficiency.} Causal discovery typically assumes \emph{faithfulness} (i.e., observed independencies reflect true causal structure) and \emph{causal sufficiency} (i.e., no hidden common causes). If these assumptions fail, the inferred causal structure may be incorrect.\footnote{Consider three variables $(X, Y, Z)$ where $Z$ is an unmeasured confounder influencing both $X$ and $Y$ (i.e., $Z \rightarrow X$, $Z \rightarrow Y$). Without observing $Z$, the learned $\hat{\mathcal{M}}$ may wrongly suggest a direct causal link between $X$ and $Y$, leading to incorrect explanations.}
    
    \item \textbf{Sample Complexity and Computational Constraints.} Even when causal sufficiency holds, reliable causal discovery requires a large sample size, especially in high-dimensional settings. The number of samples required grows exponentially with the number of variables, making exhaustive search computationally infeasible~\citep{kalisch2007estimating}.

    \item \textbf{Identifiability and Equivalence Classes.} Even with unlimited data and valid assumptions, causal discovery methods often recover only a Markov equivalence class of DAGs—multiple causal graphs that imply the same observational dependencies~\citep{spirtes2001causation}. This ambiguity means that without interventional data, key causal relationships may remain unresolved.
\end{itemize}

\paragraph{Partial Causal Knowledge and Robustness.}
Because exhaustive causal discovery is often intractable, practitioners must rely on estimated, partial models ($\hat{\mathcal{M}}$) by incorporating known domain relationships. While this does not guarantee absolute correctness, it provides immense value over purely statistical methods. \rev{Specifically, the principle of Independent Causal Mechanisms (ICM) provides a powerful theoretical justification for causal XAI. When an environment undergoes a distribution shift, typically only one causal mechanism changes while others remain invariant. By anchoring explanations in causal abstractions rather than purely associational features, XAI methods inherit this invariance, providing explanations that remain robust across varying contexts.} To further assess robustness, sensitivity analysis~\citep{saltelli2004sensitivity} can quantify the stability of these explanations under small perturbations to $\hat{\mathcal{M}}$.

\noindent\textbf{Takeaway.}
The ideal of fully accurate and complete XAI is difficult to achieve due to the limitations above. In light of these challenges, correlation-based explanations (e.g., feature importances, saliency maps) may suffice when the goal is merely to detect patterns, biases, or anomalies rather than to enable interventions. Nonetheless, a more nuanced view is that the \emph{required level of causal grounding} depends on the stakeholder’s objective. When reliability matters—such as in high-stakes decision-making—approximate causal models, even if imperfect, yield explanations that are fundamentally more actionable and robust when addressing the diverse interpretability questions in Fig.~\ref{fig:questions}.

%% file: _sections/05_future.tex
\section{A Way Forward}

Recognizing these challenges, we propose strategic directions to address them, focusing on two interrelated tasks: \emph{Concept Discovery} and \emph{Relation Discovery}. By advancing methods in these areas, we can approximate causal models more effectively and enhance explainable AI. Despite the limitations, we encourage the community to embrace these challenges, as they are essential steps toward realizing that \emph{explainable AI is, in essence, causality in disguise}.

\subsection{Dual Challenges in Causal XAI: Concept Discovery and Relation Discovery}

\paragraph{Concept Discovery.} Effective explanations require a shared language of interpretable concepts $\{Z_i\}$ that align with the stakeholder's understanding. Explanations should be constructed using well-defined, semantically clear variables to ensure meaningful communication.
Current XAI methods vary along a \emph{Concept-Alignment Spectrum}:
\begin{itemize}
    \item \textbf{Fully Specified Concepts:} At one end, methods like SHAP~\citep{lundberg2017unified} and causal recourse~\citep{karimi2021algorithmic} provide explanations using features $X_i$ with direct semantic meaning, such as age or income. These methods produce mappings $\phi: X \rightarrow \mathbb{R}$ that quantify feature contributions and support actionable interventions.
    \item \textbf{Low-Level Features:} At the other end, methods like saliency maps~\citep{simonyan2013deep} highlight groups of pixels in images, which lack inherent semantic meaning and require abstraction to align with human concepts.
    \item \textbf{Concept-Based Methods:} In the middle, methods like TCAV~\citep{kim2018interpretability} attempt to align explanations with predefined concepts by measuring alignment with existing embeddings. However, TCAV is limited to known concepts and cannot discover new, relevant concepts---the ``unknown unknowns''---that may be crucial for understanding the model's behavior.
\end{itemize}

To enhance concept discovery, we advocate for methods that can uncover new concepts, potentially via causal approaches such as Concept Bottleneck Models~\citep{koh2020concept}, Causal Concept Effect~\citep{goyal2019explaining}, and Neuro-Symbolic Concept Learners~\citep{marconato2023interpretability, ellis2023dreamcoder} offering promising directions by treating concepts as entities that facilitate action and interpretability. These methods enable both structured learning and deeper understanding by integrating causal reasoning into concept discovery.
Concepts should also be identified at a granularity that is \emph{useful} to a given stakeholder (or audience).  Even if we had a perfect SCM of low-level features (e.g., pixels), explanations would remain unhelpful unless translated to higher-level abstractions that align with human mental models~\citep{beckers2019abstracting, rubenstein2017causal}.  Future research should thus emphasize learning and \emph{serving} these causal concepts at the right level of detail, possibly via user interaction or iterative refinement.

\paragraph{Relation Discovery.} Discovering causal relationships among identified variables $\{V_i\}$ remains a foundational challenge in interpretability. Traditional causal discovery and structure learning aim to infer a directed acyclic graph $\mathcal{G} = (\mathbf{V}, \mathbf{E})$, where $\mathbf{V}$ represents variables and $\mathbf{E}$ captures causal dependencies. Established algorithms like PC~\citep{spirtes2001causation} and score-based methods~\citep{huang2018generalized} provide structure but are often computationally demanding in high-dimensional settings.
We propose leveraging advances in \emph{causal representation learning}~\citep{bengio2019meta, scholkopf2021causal}, which strive to capture both the concept space and causal structure. These approaches can deepen interpretability by jointly learning representations that are both semantically meaningful and causally informative.
However, the scalability challenge becomes especially pressing for large-scale or high-dimensional models (e.g., LLMs).  Here, purely symbolic or conditional-independence-based causal discovery can be prohibitively slow. Exploring approximations such as sparse regressions, online structure learning, or domain-guided heuristics~\citep{granger1969investigating} may be necessary to handle real-world data at scale.

\subsection{Leveraging Approximate Models and Interactive Approaches}

In practice, obtaining a fully accurate causal model is often infeasible due to data and computational limitations. To address this, we advocate for \emph{approximate causal models} supplemented by interactive, user-driven methods. By iteratively refining causal structures through user feedback and interventions, approximate models can better align with real-world needs, enabling users to validate and adjust causal assumptions as needed.

In scenarios where full causal structure discovery is impractical, \emph{interactive approaches} enable iterative refinement of causal models based on user interactions and counterfactual queries. This user-in-the-loop methodology aligns with recent advances in chain-of-thought reasoning~\citep{wei2022chain} and large language models (e.g., GPT-4), allowing explanations to evolve with stakeholder feedback, enhancing their relevance and causal grounding.
Moreover, an interactive process can reveal the ``right'' level of abstraction for each user’s goals~\citep{teso2023leveraging}, acknowledging that an exhaustive model of the world is neither feasible nor desirable for most tasks. Instead, explanations should focus on \emph{those causal factors} that the user can understand and act upon, effectively capturing a subset of the world’s SCM aligned with the user’s mental model~\citep{gerstenberg2024counterfactual, gerstenberg2021counterfactual}.

\subsection{Summary of Recommendations}

In spite of limitations, our core thesis remains: \textit{XAI is causality in disguise}. Advances in concept and relation discovery will enable the construction of (approximate) causal models that enhance the rigor, reliability, and applicability of explanations.
We encourage the community to invest in:

\begin{enumerate}
    \item \textbf{Developing Robust Causal Discovery Algorithms:} Improving methods to better handle high-dimensional data, hidden confounders, and model misspecification. Future work should also explore multi-level abstractions~\citep{rubenstein2017causal,beckers2019abstracting} to balance expressivity with user interpretability.
    \item \textbf{Advancing Causal Representation Learning:} Jointly learning concepts and causal relations that are interpretable, stable, and scalable. Concept discovery should align with users’ internal models, recognizing that no single ``correct'' variable decomposition exists universally~\citep{teso2023leveraging}. This calls for methods that bridge machine-learned representations with human-understandable structures.
    \item \textbf{Promoting Interactive Explanations:} Engaging stakeholders in refining causal models through iterative feedback. 
    This aligns with ``explanatory interactive learning''~\citep{teso2019explanatory}, where users refine models by correcting explanations, steering causal learning to ensure relevance and actionability.
\end{enumerate}

By pursuing these directions, we can mitigate the practical challenges of causal XAI while moving toward a principled foundation for explainability. 

\subsection{Alternative Views: Possible Limitations of SCMs for Representing Human Intuition}

While we argue that explainable AI is fundamentally a causal problem, an opposing perspective questions whether Structural Causal Models (SCMs) are the right framework for capturing human reasoning used for explanation.
Specifically, SCMs are often criticized for their limited expressiveness in representing rich, structured mental models of the world.
Human reasoning frequently operates through \textit{intuitive theories}~\citep{gerstenberg2017intuitive}, which go beyond the propositional nature of SCMs.
For example, in physics, people intuitively understand the world in terms of objects, forces, and attributes (e.g., mass, elasticity, friction), rather than abstract causal graphs. When reasoning counterfactually, humans naturally ask questions such as ``What if this object hadn’t been there?'' or ``What if a reasonable person had acted differently?''—queries that are difficult to formalize in an SCM, where variables typically represent discrete events or predefined states.
Unlike SCMs, which encode causal mechanisms as structured equations over variables, human cognition often blends causal reasoning with spatial, temporal, and qualitative constraints, making it unclear whether SCMs are the best mathematical framework for modeling how people construct and interpret explanations.
%
For further discussion of SCMs and their comparison to Potential Outcomes, see Appendix~\ref{app:causality_sub_2}.

One response to this challenge is to extend SCMs with hierarchical abstractions that align with how humans structure knowledge. Recent work on \textit{causal abstraction models} and \textit{neuro-symbolic reasoning} offers promising directions by introducing layers of representation that move beyond traditional SCM constraints. However, these approaches remain an open area of research, and critics argue that a truly human-aligned XAI framework may require fundamentally different tools—potentially drawing from cognitive science, probabilistic programs, or physics-inspired models—to bridge the gap between mechanistic causality and intuitive human understanding.

%% file: _sections/06_conclusion.tex
\subsection{Conclusion}

The vast landscape of explainable AI is marked by an overwhelming number of methods, surveys, and perspectives, all of which underscore the field's current lack of consensus.
This paper argues that achieving such consensus hinges on viewing XAI through a causal lens, demonstrating through formal necessity and sufficiency results that \emph{causal assumptions} are both essential and adequate to address purpose-driven questions around the “What?”, “How?”, and “Why (not)?” of explanations.
By positioning explanations within a causal model, researchers and practitioners can align on clearer, more robust foundations for XAI, effectively viewing it as \emph{causality in disguise}.


Building on this viewpoint, we advocate for advancing \emph{concept discovery} and \emph{relation discovery} to identify variables and causal links at a level of abstraction that matches stakeholders’ mental models. In practice, approximate causal modeling and interactive refinement are key. By iteratively engaging users (e.g., through counterfactual queries or explanatory interactive learning), we can converge on a causal representation that offers actionable insights while accommodating the complexities of real-world systems.
Ultimately, we encourage the community to see beyond fragmented XAI methods and move toward a unified causal framework—one that embraces multi-level abstractions, interactive approaches, and real-world constraints. Although challenges like scalability, incomplete domain knowledge, and unmeasured confounders remain, they should be viewed not as barriers but as opportunities to refine and extend causal discovery methodologies for explainable AI. By doing so, we believe the field can progress toward a shared, actionable approach to XAI that balances rigor, utility, and adaptability for diverse stakeholders.

%% file: _sections/091_sufficiency_necessity.tex
\section{Sufficiency and Necessity of Causality for Explainable AI}
\label{app:sufficiency-necessity}
\input{_sections/041_sufficiency_proof}

\vspace{-3mm}
\input{_sections/042_necessity_proof}

%% file: _sections/041_sufficiency_proof.tex

\begingroup
\renewcommand{\thetheorem}{\ref{thm:sufficiency}} 
\begin{theorem}[Sufficiency of the True SCM for XAI]
    Let $\mathcal{M} = \langle \mathbf{U}, \mathbf{V}, \mathbf{F}, P(\mathbf{U}) \rangle$ be the unique \emph{true} Structural Causal Model of the data-generating process. Under standard assumptions (acyclicity, no unmeasured confounders, well-defined exogenous variables), having full access to $\mathcal{M}$ is \textbf{sufficient} to provide accurate and complete answers to the six core XAI questions (Q1–Q6) depicted in \cref{fig:questions}.
\end{theorem}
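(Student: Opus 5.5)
The plan is to argue constructively: for each query type we exhibit an explicit procedure that computes the answer from $\mathcal{M}$ alone. By the definition of accurate and complete answers, an answer is correct exactly when it coincides with what the true $\mathcal{M}$ predicts. So it suffices to show that every query behind Q1--Q6 is a well-defined functional of $\mathcal{M}$, and that the procedure evaluates that functional. The common engine is that, under acyclicity, the structural equations $\mathbf{F}$ admit a unique solution $\mathbf{V} = \mathbf{F}^{*}(\mathbf{U})$. We construct this solution by fixing a topological order of $\mathcal{G}$ and substituting parents into children recursively. This gives a measurable map from exogenous states to endogenous values, and every query is then built from this map.

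\textbf{Observational (Q1, Q2).} We define $P(\mathbf{V})$ as the pushforward of $P(\mathbf{U})$ under $\mathbf{F}^{*}$. This distribution is uniquely determined, so Q1 is answered exactly. For Q2, the ``underlying factors'' are by definition the pair $(\mathbf{F}, P(\mathbf{U}))$ together with $\mathcal{G}$, and all of these are components of $\mathcal{M}$. Because we assume no unmeasured confounders, the $U_{V}$ are mutually independent. This yields the Markov factorization $P(\mathbf{V}) = \prod_i P(V_i \mid \operatorname{pa}(V_i))$, which makes the generative account explicit and local.

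\textbf{Interventional (Q3, Q4).} For an intervention $\doop(X=x)$ we form the submodel $\mathcal{M}_x$ by replacing $f_X$ with the constant $x$. This submodel is still acyclic, so it has a unique solution map $\mathbf{F}^{*}_x$. We then set $P(Y \mid \doop(X=x))$ to be the pushforward of $P(\mathbf{U})$ under $\mathbf{F}^{*}_x$. Q4 concerns internal mechanisms, for example a hidden layer $H$ in the neural-network reading of endogenous variables. We treat it the same way, with interventions $\doop(H=h)$ and path-specific propagation along the directed paths of $\mathcal{G}$, which are obtained by iterating the relevant $f_{V_i}$.

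\textbf{Counterfactual (Q5, Q6) and main obstacle.} We apply abduction–action–prediction. First, condition $P(\mathbf{U})$ on the evidence $(X=x, Y=y)$ to obtain $P(\mathbf{U}\mid x,y)$. Second, pass to $\mathcal{M}_{x'}$. Third, push the posterior forward to obtain the distribution of $Y_{X=x'}$, or the point value $Y_{X=x'}(u)$ when $u$ is given, as the correctness definition stipulates.

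The hard step is abduction. The posterior must be well defined, which requires the evidence to have positive probability or a regular conditional distribution to exist. If a point value $u$ is required, it must be recoverable from the evidence. I would first handle this by taking ``well-defined exogenous variables'' to mean that $u$ is given as part of the query, matching the definition $Y_{X=x'}(u)$. The deterministic map $\mathbf{F}^{*}_{x'}$ then yields the answer directly. The probabilistic case would be covered via regular conditional probabilities on standard Borel spaces.

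Taken together, the three cases show that every answer to Q1--Q6 is computed solely from $\mathcal{M}$ and coincides with $\mathcal{M}$'s own prediction, which proves sufficiency.
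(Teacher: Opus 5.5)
Your proposal is correct and takes essentially the same route as the paper's proof: a question-by-question argument that each of Q1--Q6 is determined by $\mathcal{M}$. It uses the law of structural models for Q1--Q2 (your pushforward of $P(\mathbf{U})$ under the acyclic solution map $\mathbf{F}^{*}$), recursive evaluation of $\mathbf{F}$ along the paths of $\mathcal{G}$ for Q3--Q4, and abduction--action--prediction with $\mathbf{u}$ held fixed for Q5--Q6. Your additions (uniqueness of the solution under acyclicity, the explicit submodel $\mathcal{M}_x$, and taking $u$ as given to match the paper's pointwise definition of counterfactual correctness) make precise what the paper only asserts, such as that $\mathbf{u}$ can simply be inferred from $(\mathbf{x},\mathbf{y})$; the one soft spot is your regular-conditional-probability fallback, since such kernels are unique only almost everywhere and so do not by themselves fix the posterior at a probability-zero evidence value.
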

\addtocounter{theorem}{-1} 
\endgroup

\begin{proof}
    We proceed by examining each question individually, using the predefined variables and formal language.
    
    
    
    \begin{tcolorbox}[title=Q1: What explains the distribution of the data?]
        The SCM $\mathcal{M}$ specifies the structural equations $\mathbf{F}$ and the distribution $P(\mathbf{U})$. The joint distribution of the endogenous variables $\mathbf{V}$ can be derived from $\mathcal{M}$ using the \emph{law of structural models}:

        \vspace{-2mm}
    
        \[
            P(\mathbf{V}) = \int_{\mathbf{U}} \prod_{V_i \in \mathbf{V}} \delta \Big( V_i - f_{V_i}(\operatorname{pa}(V_i), U_{V_i}) \Big) P(\mathbf{U}) \, d\mathbf{U}
        \]

        \vspace{-2mm}
    
        where $\delta(\cdot)$ is the Dirac delta function ensuring that $V_i$ satisfies its structural equation, and $\operatorname{pa}(V_i)$ are the parents of $V_i$ in the causal graph $\mathcal{G}$ associated with $\mathcal{M}$.
        %
        Since we can derive $P(\mathbf{V})$ from $\mathcal{M}$, we can fully explain the distribution of the data, accounting for all dependencies and relationships specified by the structural equations and exogenous distributions.
    \end{tcolorbox}
    
    \begin{tcolorbox}[title=Q2: What underlying factors generate the data?]
        In the SCM $\mathcal{M}$, the exogenous variables $\mathbf{U}$ represent the underlying factors that are not determined within the model but affect the endogenous variables through the structural equations. Each endogenous variable $V_i$ is generated by: $V_i = f_{V_i}(\operatorname{pa}(V_i), U_{V_i})$.
        Access to $\mathcal{M}$ gives both the exogenous variables $\mathbf{U}$ and the structural equations $\mathbf{F}$, allowing us to identify and understand the underlying factors generating the observed data.
    \end{tcolorbox}
    
    \begin{tcolorbox}[title=Q3: How does the model~transform inputs into outputs?]
        Suppose the AI model takes inputs $\mathbf{X} \subseteq \mathbf{V}$ and produces outputs $\mathbf{Y} \subseteq \mathbf{V}$. The causal pathways from $\mathbf{X}$ to $\mathbf{Y}$ are specified in the causal graph $\mathcal{G}$ associated with $\mathcal{M}$.
        %
        The structural equations define how each variable depends on its parents: $V_i = f_{V_i}(\operatorname{pa}(V_i), U_{V_i})$.
        By following these equations along the paths from $\mathbf{X}$ to $\mathbf{Y}$, we can trace how inputs are transformed into outputs through the model. Specifically, we can compute the effect of $\mathbf{X}$ on $\mathbf{Y}$ by recursively evaluating the structural equations.
    
    \end{tcolorbox}
    
    \begin{tcolorbox}[title=Q4: How do the model's internal mechanisms function?]
        Internal mechanisms (e.g., hidden layers, intermediate computations) are represented by intermediate endogenous variables $\mathbf{H} \subseteq \mathbf{V}$ in the SCM.
        %
        The structural equations for the internal variables are:
        $
        H_j = f_{H_j}(\operatorname{pa}(H_j), U_{H_j})
        $
        By analyzing these equations and their dependencies, we can understand how the internal variables operate and contribute to the processing of inputs $\mathbf{X}$ to outputs $\mathbf{Y}$. The causal graph $\mathcal{G}$ of model $\mathcal{M}$ shows the connections between $\mathbf{X}$, $\mathbf{H}$, and $\mathbf{Y}$, allowing us to trace the flow of information and causation through the model's internal structure.
        %
        %
    \end{tcolorbox}
    
    \begin{tcolorbox}[title=Q5: Why does the model make a specific decision for a given input?]
        Given a specific input $\mathbf{X} = \mathbf{x}$ and the observed output $\mathbf{Y} = \mathbf{y}$, we can perform \emph{abduction} to infer the values of the exogenous variables $\mathbf{U} = \mathbf{u}$ consistent with these observations. Using the inferred $\mathbf{u}$ and the structural equations $\mathbf{F}$, we can then trace the causal pathways from $\mathbf{X} = \mathbf{x}$ to $\mathbf{Y} = \mathbf{y}$, identifying the causal mechanisms and intermediate variables that led to the decision.
    
    
    
    
    
    
    \end{tcolorbox}
    
    \begin{tcolorbox}[title=Q6: Why would the decision differ if the input had been different?]
        To answer this counterfactual question, we consider an alternative input $\mathbf{X} = \mathbf{x}'$ while keeping the inferred exogenous variables $\mathbf{U} = \mathbf{u}$ fixed at the values inferred during abduction. Finally, comparing the counterfactual output $\mathbf{Y}^{*}$ with the original output $\mathbf{Y} = \mathbf{y}$ to understand how and why the decision would differ under the alternative input.
    
    
    
    
    
    
    
    
    \end{tcolorbox}
    
    \paragraph{Overall Conclusion}

    In each case, access to the true SCM $\mathcal{M}$ provides sufficient information—whether through computing distributions, tracing causal pathways, or performing counterfactual reasoning—to accurately and completely answer each of the six XAI questions.
    
    
    

\end{proof}

%% file: _sections/042_necessity_proof.tex

\begingroup
\renewcommand{\thetheorem}{\ref{thm:necessity}} 
\begin{theorem}[Necessity of the True SCM for XAI]
    Suppose a dataset $\mathbf{V}$ is generated by a \emph{true} but unknown SCM $\mathcal{M}$. If an alternative model $\hat{\mathcal{M}}$ does not match $\mathcal{M}$ in at least one structural equation or in its exogenous distribution $P(\mathbf{U})$, then there exists at least one of the six XAI questions (Q1–Q6) for which $\hat{\mathcal{M}}$ cannot provide an accurate and complete answer.
\end{theorem}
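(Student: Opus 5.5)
The plan is a contrapositive argument that rests on \cref{assum:separating_family} and on the definition of accurate and complete answers. By that definition, $\hat{\mathcal{M}}$ answers a query $Q \in \mathfrak{Q}$ accurately and completely exactly when $Q(\hat{\mathcal{M}}) = Q(\mathcal{M})$. So the theorem is equivalent to the claim that $\hat{\mathcal{M}} \neq \mathcal{M}$ forces $Q(\hat{\mathcal{M}}) \neq Q(\mathcal{M})$ for some $Q$. I would first make explicit that ``does not match $\mathcal{M}$ in at least one structural equation or in $P(\mathbf{U})$'' means $\hat{\mathcal{M}} \neq \mathcal{M}$ as tuples $\langle \mathbf{U}, \mathbf{V}, \mathbf{F}, P(\mathbf{U}) \rangle$, on a common variable set. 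I would also take both models to lie in the model class over which the assumption is stated, namely acyclic, causally sufficient SCMs, as in \cref{thm:sufficiency}.

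The core steps are as follows.
\begin{enumerate}
\item Suppose, for contradiction, that $\hat{\mathcal{M}}$ answers all of Q1--Q6 accurately and completely. Then $Q(\hat{\mathcal{M}}) = Q(\mathcal{M})$ for every $Q \in \mathfrak{Q}$.
\item By \cref{assum:separating_family}, $\mathcal{M}$ and $\hat{\mathcal{M}}$ must then be equal. This contradicts the hypothesis that they differ in some $f_V$ or in $P(\mathbf{U})$.
\end{enumerate}

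To make the argument more than a restatement of the assumption, I would then show which query actually separates the two models, by cases on where they differ.
\begin{itemize}
\item \textbf{Case A: the induced $P(\mathbf{V})$ differ.} Then Q1 separates them, via the law of structural models used in the proof of \cref{thm:sufficiency}.
\item \textbf{Case B: $P(\mathbf{V})$ agree but some mechanism $f_V$ differs as a function of $\operatorname{pa}(V)$.} Then some intervention $\doop(\operatorname{pa}(V) = \mathbf{p})$ yields different distributions of $V$, so Q3/Q4 separate them.
\item \textbf{Case C: all interventional distributions agree.} Then the models can differ only in how $f_V$ couples with $U_V$. Choose a unit $u$ on which the two functions disagree, and use abduction--action--prediction to obtain a differing $Y_{X=x'}(u)$, so Q5/Q6 separate them.
\end{itemize}

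The main obstacle is the final two cases, and especially Case C. Differing tuples need not differ in any of these queries: relabelings of $\mathbf{U}$, measure-zero changes to $f_V$, or counterfactually equivalent models produce the same answers. This is exactly why \cref{assum:separating_family} is needed. I would therefore rely on the assumption to close the argument. At the same time, I would note that, without it, the conclusion holds only up to counterfactual equivalence, that is, modulo the quotient of SCMs that agree on every query in $\mathfrak{Q}$.
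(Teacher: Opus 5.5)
Your proposal is correct and follows the paper's own argument: a proof by contradiction organized around the observational (Q1--Q2), interventional (Q3--Q4) and counterfactual (Q5--Q6) query types, with the separating-family assumption doing the real work (the paper invokes it explicitly only in its closing note on Markov equivalence classes; its per-question walkthrough is informal). You also note that the tuple-level hypothesis alone is not enough, because relabelings of $\mathbf{U}$ or counterfactually equivalent models give identical answers to all of Q1--Q6; this is correct, and it makes explicit why the assumption is indispensable, a point the paper leaves implicit.
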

\addtocounter{theorem}{-1} 
\endgroup

\begin{proof}
    We will demonstrate that without causal information—specifically, without access to the true SCM $\mathcal{M}$—it is impossible to answer the six core XAI questions. We proceed by addressing each question individually, using the predefined variables and formal language established earlier.

    \begin{tcolorbox}[title=Q1: What explains the distribution of the data?]
        Without causal information, we only have access to the observational distribution $P(\mathbf{V})$ of the endogenous variables $\mathbf{V}$. However, $P(\mathbf{V})$ encodes statistical associations but not causal relationships. Statistical dependencies in $P(\mathbf{V})$ can arise from various causal structures, such as direct causation, confounding, or even collider effects.
    
        \textbf{Illustrative Example}:
        Consider three variables $X$, $Y$, and $Z$ with the following causal structures:
        \begin{enumerate}
            \item \textbf{Confounding}: $Z$ is a common cause of $X$ and $Y$, i.e., $Z \rightarrow X$, $Z \rightarrow Y$.
            \item \textbf{Causal Chain}: $X$ causes $Z$, which in turn causes $Y$, i.e., $X \rightarrow Z \rightarrow Y$.
            \item \textbf{Collider}: $X$ and $Y$ both cause $Z$, i.e., $X \rightarrow Z \leftarrow Y$.
        \end{enumerate}
        All these structures can produce similar statistical associations between $X$ and $Y$ in $P(\mathbf{V})$. Without causal assumptions or knowledge of the underlying SCM, we cannot distinguish among these possibilities.
    \end{tcolorbox}

    \begin{tcolorbox}[title=Q2: What underlying factors generate the data?]
        In an SCM $\mathcal{M} = \langle \mathbf{U}, \mathbf{V}, \mathbf{F}, P(\mathbf{U}) \rangle$, the exogenous variables $\mathbf{U}$ and structural equations $\mathbf{F}$ define how the observed data $\mathbf{V}$ are generated:
        $
            V_i = f_{V_i}(\operatorname{pa}(V_i), U_{V_i}), ~ \forall ~ V_i \in \mathbf{V}
        $
        Without access to $\mathcal{M}$, we lack knowledge of both $\mathbf{U}$ (the unobserved factors) and $\mathbf{F}$ (the causal mechanisms). Consequently, we cannot accurately model the data-generating process.
    \end{tcolorbox}

    \begin{tcolorbox}[title=Q3: How does the model~transform inputs into outputs?]
        Suppose the AI model is represented as a function $f: \mathbf{X} \rightarrow \mathbf{Y}$. Without causal information, we can estimate the conditional distribution $P(\mathbf{Y} \mid \mathbf{X})$ from observational data. However, this distribution reflects statistical associations, not necessarily causal effects.
        Potential issues include:
        \begin{itemize}
            \item \textbf{Confounding}: A hidden variable $Z \in \mathbf{V}$ (or $Z \in \mathbf{U}$) affects both $\mathbf{X}$ and $\mathbf{Y}$, inducing spurious associations.
            \item \textbf{Reverse Causation}: The true causal direction might be $\mathbf{Y} \rightarrow \mathbf{X}$.
            \item \textbf{Feedback Loops}: Cyclic dependencies complicate the interpretation of $P(\mathbf{Y} \mid \mathbf{X})$.
        \end{itemize}
        
        Without the causal graph $\mathcal{G}$, we cannot compute the interventional distribution:
        $
            P(\mathbf{Y} \mid \doop(\mathbf{X} = \mathbf{x}))
        $
        which reflects the causal effect of setting $\mathbf{X}$ to $\mathbf{x}$.
    \end{tcolorbox}

    \begin{tcolorbox}[title=Q4: How do the model's internal mechanisms function?]
        Internal mechanisms involve the causal interactions among hidden or intermediate variables within the model.
        Let $\mathbf{H} \subseteq \mathbf{V}$ represent internal variables (e.g., hidden layers in a neural network). The structural equations for $\mathbf{H}$ and their causal relationships with $\mathbf{X}$ and $\mathbf{Y}$ are given by:
        $
            H_j = f_{H_j}(\operatorname{pa}(H_j), U_{H_j})
        $
        Without knowledge of $\mathcal{M}$, we cannot specify these equations or the causal graph $\mathcal{G}$, preventing us from understanding how $\mathbf{H}$ mediates between $\mathbf{X}$ and $\mathbf{Y}$.
    \end{tcolorbox}

    \begin{tcolorbox}[title=Q5: Why does the model make a specific decision for a given input?]
        Explaining a specific decision requires identifying the causal factors that led from the input $\mathbf{X} = \mathbf{x}$ to the output $\mathbf{Y} = \mathbf{y}$.
        To perform this explanation, we need to:
        \begin{enumerate}
            \item \textbf{Abduction}: Infer the exogenous variables $\mathbf{U} = \mathbf{u}$ consistent with $\mathbf{X} = \mathbf{x}$ and $\mathbf{Y} = \mathbf{y}$.
            \item \textbf{Trace Causal Pathways}: Use the structural equations to identify how changing $\mathbf{X}$ affects $\mathbf{Y}$.
        \end{enumerate}
        Without $\mathcal{M}$, we cannot perform abduction because $\mathbf{U}$ and $\mathbf{F}$ are unknown. Additionally, we cannot trace causal pathways without the causal graph $\mathcal{G}$.
    \end{tcolorbox}

    \begin{tcolorbox}[title=Q6: Why would the decision differ if the input had been different?]
        Answering this question requires \textbf{counterfactual reasoning}, which involves considering a hypothetical scenario where the input is $\mathbf{X} = \mathbf{x}'$ (different from the observed $\mathbf{X} = \mathbf{x}$) and determining the corresponding output $\mathbf{Y}_{\mathbf{X} = \mathbf{x}'}(\mathbf{u})$.
        As per \citet{pearl2009causality}, computing counterfactuals involves:
        \begin{enumerate}
            \item \textbf{Abduction}: Infer $\mathbf{U} = \mathbf{u}$ from the observed data $(\mathbf{X} = \mathbf{x}, \mathbf{Y} = \mathbf{y})$.
            \item \textbf{Action}: Modify the structural equations to reflect the counterfactual intervention $\doop(\mathbf{X} = \mathbf{x}')$.
            \item \textbf{Prediction}: Compute the counterfactual outcome $\mathbf{Y}_{\mathbf{X} = \mathbf{x}'}(\mathbf{u})$ using the modified model.
        \end{enumerate}
        Without the SCM $\mathcal{M}$, none of these steps can be performed accurately.
    \end{tcolorbox}

    \rev{\textit{Note on Markov Equivalence Classes (MECs):} If $\hat{\mathcal{M}}$ and $\mathcal{M}$ belong to the same MEC, they share the exact same observational distribution, meaning queries Q1 and Q2 will not separate them. However, models within an MEC must have at least one differing edge orientation. Consequently, an interventional query $do(X=x)$ on that specific edge, or a counterfactual query (Q5, Q6), will yield different distributions. Because our separating family $\mathfrak{Q}$ includes these interventional and counterfactual queries, Assumption~\ref{assum:separating_family} holds, successfully separating models even within the same MEC.}

    \paragraph{Overall Conclusion}

    The absence of causal information—specifically, the structural causal model $\mathcal{M}$—restricts us to the observational distribution $P(V)$, 
    preventing us from identifying underlying data-generating mechanisms, understanding causal pathways within the model, and performing counterfactual reasoning. Consequently, causal information is essential for providing accurate and reliable explanations in XAI. Without it, explanations may be incomplete, incorrect, or misleading.
\end{proof}

%% file: _sections/092_more_causality.tex
\section{A Guided Illustration of Causal Modeling Concepts}
\label{app:more_causality}

We expand on foundational causal modeling ideas through a concrete example as follows: 
\begin{enumerate}
    \item \cref{app:causality_sub_1} illustrates exogenous vs. endogenous variables,
    \item \cref{app:causality_sub_2} contrasts SCM and Potential Outcomes (PO) frameworks,
    \item \cref{app:causality_sub_3} highlights computational limits of exact discovery, and
    \item \cref{app:causality_sub_4} motivates open research questions (see ~\cref{app:grad_questions}).
\end{enumerate}

\subsection{SCM Basics via a Simple Example}
\label{app:causality_sub_1}
We consider a three-variable SCM:

\begin{align*}
    \texttt{SunriseTime}    &\rightarrow    \quad   \texttt{MoodToday} \quad \leftarrow \texttt{SleepHours} \\
    \texttt{MoodToday}      &=              \quad   f(\texttt{SunriseTime}, \texttt{SleepHours}, U)
\end{align*}

\texttt{SunriseTime} is an exogenous variable—fixed by environment or external conditions.
\texttt{MoodToday} and \texttt{SleepHours} are endogenous variables, defined through structural relationships.
Notably, both variable types may or may not be observed, and $U$ is unobserved noise.

\subsection{SCM vs. Potential Outcomes (PO)}
\label{app:causality_sub_2}

Our main position is not that structural causal models (SCMs) are the only viable framework for explainable AI (XAI), but rather that \emph{XAI questions are inherently causal}, and SCMs provide one principled framework to address them. Specifically, SCMs support observational, interventional, and counterfactual reasoning in a unified formalism that makes both conceptual assumptions and computational tradeoffs explicit~\citep{halpern2005causes, pearl2009causality}.

The Potential Outcomes (PO) framework, on the other hand, models causal effects as contrasts between unit-level responses under hypothetical interventions (e.g., $Y_i(1)$ and $Y_i(0)$)~\citep{rubin2005causal}. While PO is well-suited for estimating average treatment effects (ATEs) in randomized trials or observational studies, it is less expressive for answering individualized, mechanistic, or counterfactual XAI questions~\citep{imbens2020potential}.

For example:
\begin{itemize}
  \item PO-style query: What is the average effect of 8 vs. 4 hours of sleep on mood?
  \item SCM-style query: What would \emph{my} mood have been today, had I slept 8 hours instead of 4?
\end{itemize}

Moreover, SCMs naturally represent causal mechanisms, allow reasoning over latent confounders, and support interventions on learned internal representations—capabilities that are vital in modern machine learning systems.
That said, we acknowledge both frameworks can be reconciled under certain assumptions and offer complementary perspectives~\citep{imbens2020potential, pearl2019seven}. 

\subsection{Approximate vs. Exact Causal Discovery}
\label{app:causality_sub_3}

While SCMs offer a powerful lens for interpreting machine learning models, discovering the correct SCM from data remains fraught with theoretical and practical challenges:

\begin{itemize}
    
    \item \textbf{Non-identifiability.} Multiple SCMs can induce the same observational distribution. Without assumptions such as faithfulness, acyclicity, or no hidden confounding, it is impossible to infer the true causal structure solely from data~\citep{spirtes2001causation, pearl2009causality}.
    
    \item \textbf{Computational Hardness.} Even under identifiability, learning the DAG structure is NP-hard. Score-based approaches (e.g., GES) and constraint-based algorithms (e.g., PC, FCI) scale poorly as the number of variables increases. Recent work shows that many interpretability queries—such as counterfactuals or recourse—inherit this complexity~\citep{adolfi2024computational, barcelo2020model, bassan2024local}.
    
\end{itemize}

\textit{Example.} In our toy SCM:
\[
    \texttt{SunriseTime} \rightarrow \texttt{MoodToday} \leftarrow \texttt{SleepHours}
\]
if \texttt{SunriseTime} is latent, then even observing a statistical correlation between \texttt{SunriseTime} and \texttt{MoodToday} does not distinguish:
\begin{itemize}
  \item \texttt{SleepHours} $\rightarrow$ \texttt{MoodToday},
  \item \texttt{MoodToday} $\rightarrow$ \texttt{SleepHours}, or
  \item a hidden common cause.
\end{itemize}

Thus, causal discovery is often ill-posed and computationally expensive. As such, we advocate for \textit{approximate or partial} discovery: identifying abstractions or fragments of the true causal structure that are ``good enough'' to answer XAI queries of interest. This motivates research into $\epsilon$-approximate causal models and robustness analysis under model uncertainty---directions explored further in~\cref{app:grad_questions}.

\rev{

\subsection{LIME as Approximate Causal Inference}
We can view local surrogate models like LIME \cite{ribeiro2016should} as estimating a localized structural equation $Y \approx \sum w_i X_i$ around a specific input. However, because LIME relies on perturbing inputs independently, it ignores the true causal graph generating $X$. Consequently, it generates off-manifold, out-of-distribution samples. Under covariate shift, LIME's attributions often fail because the surrogate captures the model's arbitrary behavior on these off-manifold points rather than the true causal mechanism. Viewing LIME through an SCM lens immediately diagnoses this failure mode, demonstrating how making causal assumptions explicit provides a rigorous framework to gauge effectiveness, diagnose failures predictably, and ultimately improve community convergence on robust explanation standards.

\subsection{Algorithmic Recourse and Counterfactual Explanations}
Traditional counterfactual explanations seek to provide recourse by finding the minimal feature perturbation required to flip a model's prediction (e.g., ``If your education level were higher, your loan would be approved''). However, by treating features as independently manipulable, these methods ignore the underlying causal dependencies of the environment. In reality, features are causally intertwined; for instance, obtaining a higher education degree naturally affects both a person's age and future salary. If a stakeholder acts on a non-causal recommendation, the downstream effects on other variables are unaccounted for, frequently rendering the suggested action physically impossible or entirely ineffective at changing the actual outcome. As demonstrated in \textit{Algorithmic Recourse: from Counterfactual Explanations to Interventions} \cite{karimi2021algorithmic}, meaningful recourse must be formalized not as a geometric nearest-neighbor search, but as a causal intervention. Viewing counterfactual explanations through the lens of an SCM ensures that recommended actions respect the structural equations of the real world, transforming them into valid, actionable guidance.

}

\subsection{Where to Go Next}
\label{app:causality_sub_4}
The above conceptual challenges motivate practical and theoretical open research questions outlined in~\cref{app:grad_questions}.


%% file: _sections/093_grad_questions.tex
\section{Research Questions for Advancing Causal Explainability}
\label{app:grad_questions}

To help operationalize our position and inspire actionable follow-up, we present a set of concrete research questions suitable for graduate-level exploration. These questions span causal abstraction, approximate modeling, and stakeholder-aligned causal queries, and are supported by relevant literature to guide early-stage researchers.

\begin{table}[ht]
    \centering
    \renewcommand{\arraystretch}{2.4}
    \begin{tabular}{|c|c|p{5.75cm}|p{5.75cm}|p{3cm}|}
        \hline
            & & Question & Approach & References \\
        \hhline{=====}
            \multirow{2}{*}{\rotatebox[origin=c]{90}{\parbox[c]{5.5cm}{\centering Causal Abstraction \& \\Representation Learning}}}
            & RQ1
            & How can we systematically identify and extract human-interpretable concepts from deep neural networks (DNNs) to construct meaningful causal abstractions?
            & Develop unsupervised or semi-supervised algorithms leveraging neuro-symbolic methods to identify concept-level abstractions from latent spaces.
            & \citep{geiger2023causal, marconato2023not, zhang2024causal} \\ 
        \cline{2-5}
            & RQ2
            & Under what conditions can we guarantee that causal abstractions discovered from low-level neural representations remain faithful to the original neural network's causal structure?
            & Study theoretical guarantees for causal abstraction under approximate causal modeling frameworks. Quantify trade-offs between abstraction fidelity and interpretability.
            & \citep{beckers2019abstracting, karimi2023relationship, rubenstein2017causal} \\
        \hline
            \multirow{3}{*}{\rotatebox[origin=c]{90}{\parbox[c]{8cm}{\centering Approximate and Partial \\Causal Models}}}
            & RQ3
            & How can we formulate and efficiently compute approximate causal explanations that balance computational complexity with explanation accuracy?
            & Formalize a notion of $\epsilon$-approximate causal explanations and develop scalable algorithms to derive such explanations from large-scale models. 
            & \citep{glymour2019review, montagna2024score} \\
        \cline{2-5}
            & RQ4
            & Can we establish rigorous bounds on how approximations in causal discovery methods affect downstream XAI tasks such as model debugging or user trust?
            & Empirically and theoretically analyze sensitivity to approximation errors; develop robustness metrics for causal XAI methods.
            & \citep{janzing2020feature} \\
        \cline{2-5}
            & RQ5
            & What computational trade-offs arise when using approximate causal models in realistic XAI scenarios?
            & Study whether relaxing accuracy requirements yields tangible computational benefits or if computational complexity remains a core barrier.
            & \citep{adolfi2024computational, barcelo2020model, bassan2024local} \\
        \hline
            \multirow{2}{*}{\rotatebox[origin=c]{90}{\parbox[c]{5cm}{\centering Human-Aligned Causal \\Queries \& Interactive XAI}}}
            & RQ6
            & How should interactive XAI systems leverage causal interventions to enhance users' mental models about AI systems?
            & Design and validate interactive interfaces (e.g., visual tools or agents) enabling causal interventions, and assess user understanding.
            & \citep{miller2019explanation} \\
        \cline{2-5}
            & RQ7
            & Which classes of causal queries (observational, interventional, counterfactual) are most effective in various stakeholder contexts?
            & Conduct domain-specific studies to evaluate query impact on trust, decision-making, and transparency.
            & \citep{mittelstadt2019explaining, wachter2017counterfactual} \\
        \hline
    \end{tabular}
    \label{tab:research_questions}
\end{table}